  \providecommand\BibTeX{{%
    \normalfont B\kern-0.5em{\scshape i\kern-0.25em b}\kern-0.8em\TeX}}}
\DeclareMathOperator*{\argmax}{arg\,max}
\DeclareMathOperator*{\argmin}{arg\,min}
\begin{document}

\title{Trajectory-wise Iterative Reinforcement Learning Framework for Auto-bidding}


\author{Haoming Li}
\authornote{Work done during internship at Alibaba Group.}
\affiliation{%
  \institution{Shanghai Jiao Tong University}
  \city{Shanghai}
  \country{China}
}
\email{wakkkka@sjtu.edu.cn}

\author{Yusen Huo}
\affiliation{%
  \institution{Alibaba Group}
  \city{Beijing}
  \country{China}}
\email{huoyusen.huoyusen@alibaba-inc.com}

\author{Shuai Dou}
\affiliation{%
  \institution{Alibaba Group}
  \city{Beijing}
  \country{China}}
\email{doushuai.ds@alibaba-inc.com}

\author{Zhenzhe Zheng}
\authornote{Zhenzhe Zheng is the corresponding author.}
\affiliation{%
  \institution{Shanghai Jiao Tong University}
  \city{Shanghai}
  \country{China}
}
\email{zhengzhenzhe@sjtu.edu.cn}

\author{Zhilin Zhang}
\affiliation{%
  \institution{Alibaba Group}
  \city{Beijing}
  \country{China}}
\email{zhangzhilin.pt@alibaba-inc.com}

\author{Chuan Yu}
\affiliation{%
  \institution{Alibaba Group}
  \city{Beijing}
  \country{China}}
\email{yuchuan.yc@alibaba-inc.com}

\author{Jian Xu}
\affiliation{%
  \institution{Alibaba Group}
  \city{Beijing}
  \country{China}}
\email{xiyu.xj@alibaba-inc.com}

\author{Fan Wu}
\affiliation{%
  \institution{Shanghai Jiao Tong University}
  \city{Shanghai}
  \country{China}
}
\email{fwu@cs.sjtu.edu.cn}

\renewcommand{\shortauthors}{Haoming Li et al.}


\begin{abstract}
In online advertising, advertisers participate in ad auctions to acquire ad opportunities, often by utilizing auto-bidding tools provided by demand-side platforms (DSPs). The current auto-bidding algorithms typically employ reinforcement learning (RL). However, due to safety concerns, most RL-based auto-bidding policies are trained in simulation, leading to a performance degradation when deployed in online environments. To narrow this gap, we can deploy multiple auto-bidding agents in parallel to collect a large interaction dataset. Offline RL algorithms can then be utilized to train a new policy. The trained policy can subsequently be deployed for further data collection, resulting in an iterative training framework, which we refer to as iterative offline RL.
In this work, we identify the performance bottleneck of this  iterative offline RL framework, which originates from the ineffective exploration and exploitation caused by the inherent conservatism of offline RL algorithms. To overcome this bottleneck, we propose Trajectory-wise Exploration and Exploitation (TEE), which introduces a novel data collecting and data utilization method for iterative offline RL from a trajectory perspective. Furthermore, to ensure the safety of online exploration while preserving the dataset quality for TEE, we propose Safe Exploration by Adaptive Action Selection (SEAS). Both offline experiments and real-world experiments on Alibaba display advertising platform demonstrate the effectiveness of our proposed method.
\end{abstract}

\begin{CCSXML}
<ccs2012>
   <concept>
       <concept_id>10002951.10003227.10003447</concept_id>
       <concept_desc>Information systems~Computational advertising</concept_desc>
       <concept_significance>500</concept_significance>
       </concept>
   <concept>
       <concept_id>10010147.10010257.10010258.10010261</concept_id>
       <concept_desc>Computing methodologies~Reinforcement learning</concept_desc>
       <concept_significance>500</concept_significance>
       </concept>
 </ccs2012>
\end{CCSXML}

\ccsdesc[500]{Information systems~Computational advertising}
\ccsdesc[500]{Computing methodologies~Reinforcement learning}

%

\keywords{Real-time bidding, Reinforcement Learning, Offline Reinforcement Learning}




\maketitle

\section{Introduction}
Online advertising is becoming one of the major sources of profit for Internet companies~\cite{edelman2007internet}.  
Due to the complex online advertising environments,  auto-bidding tools provided by demand-side platforms (DSPs) are commonly used to bid on behalf of advertisers to optimize their advertising performance.  
Bidding for arriving ad impressions can be viewed as a sequential decision-making problem,
and thus state-of-the-art auto-bidding algorithms leverage reinforcement learning (RL) to optimize bidding policies \cite{cai2017real, wu2018budget, he2021unified}.

However, due to safety concerns, current RL-based auto-bidding policies are trained in simulated environments. Policies trained with simulation are shown suboptimal when deployed in the real-world system \cite{mou2022sustainable}. Therefore, it is desirable to optimize the bidding policy by directly interacting with the online environments.
Classical (online) RL algorithms iteratively switch between data collection phase and policy update phase, and typically require enormous samples (i.e., transition tuples) to achieve convergence. 
However, collecting data samples can be extremely time-consuming, \emph{e.g.}, in most RL formulations for auto-bidding \cite{wu2018budget, he2021unified}, an RL episode corresponds to 24 hours, and thus training a policy may take a long time. 
Additionally, frequent updates of online policies may cause unstable performance and potentially risky bidding behaviours. 

To address these issues, DSPs usually leverage a large number of auto-bidding agents as parallel workers to allow for the efficient collection of large amount of interaction data, and train an auto-bidding policy on the dataset 
 with offline RL \cite{fujimoto2019off,levine2020offline} --- a data-driven RL paradigm that aim to extract policies from large-scale pre-collected datasets. The updated policy could again be deployed for further data collection, which results in an iterative framework for data collection and policy update. We refer to the above training paradigm as iterative offline RL, as depicted in Figure \ref{fig:TEE}. 
Iterative offline RL presents a promising solution for online policy training in industrial scenarios, and similar ideas have also been mentioned in several academic works \cite{mou2022sustainable, zhang2022text, matsushima2020deployment}.

To facilitate effective policy improvement for each iteration in iterative offline RL, it is crucial for the collected datasets to encompass sufficient information regarding various states and actions. This requires the exploration policy to incorporate a degree of randomness, 
typically achieved by introducing noise perturbation to actions \cite{lillicrap2019continuous,  matsushima2020deployment, mou2022sustainable}. 
Nonetheless, employing a random exploration policy can adversely affect the performance of the trained policy. This is because the introduced randomness undermines the exploration policy's performance, and offline RL algorithms heavily relies on the exploration policy due to the principle of conservatism (or pessimism) \cite{fujimoto2019off, kumar2020conservative, fujimoto2021minimalist,kostrikov2021offline,rashidinejad2021bridging}, which compels the newly updated policy to be close to the exploration policy. 
We demonstrate this phenomenon in Section \ref{problemEE}, highlighting the challenges of effective exploration and exploitation for iterative offline RL.

In this work, we tackle the aforementioned challenges by adopting a trajectory perspective for both the exploration  process (data collection) and the exploitation process (offline RL training).
For efficient exploration, we construct exploration policies by introducing noise into the policy's parameter space instead of the traditional action space. This choice is motivated by our key observation that the injection of parameter space noise (PSN) yields an exploration dataset with a more dispersed trajectory return distribution (please refer to Section \ref{section: T exploration} for details). This observation indicates that the dataset contains a considerable number of high-return trajectories, which are valuable for offline training. 
For effective exploitation, we propose robust trajectory weighting to fully exploit high-return trajectories in the collected dataset. Specifically, instead of uniformly sampling the dataset during training, we assign high probability weights to high-return trajectories, thereby enhancing the impact of high-quality behaviours on the training process and overcoming the conservatism problem. However, the instability of online environments leads to highly random trajectory returns that often fail to reliably reflect the trajectory qualities. 
To address this issue, we design a new trajectory quality indicator by approximating the expectation of the stochastic rewards, 
enhancing the robustness of the trajectory weighting method.
We leverage PSN for online exploration and utilize robust trajectory weighting to compute sampling probabilities before offline RL training, boosting the effectiveness of exploration and exploitation in iterative offline RL.

\begin{figure}[!t]
    \centering
    \includegraphics[width=0.45\textwidth]{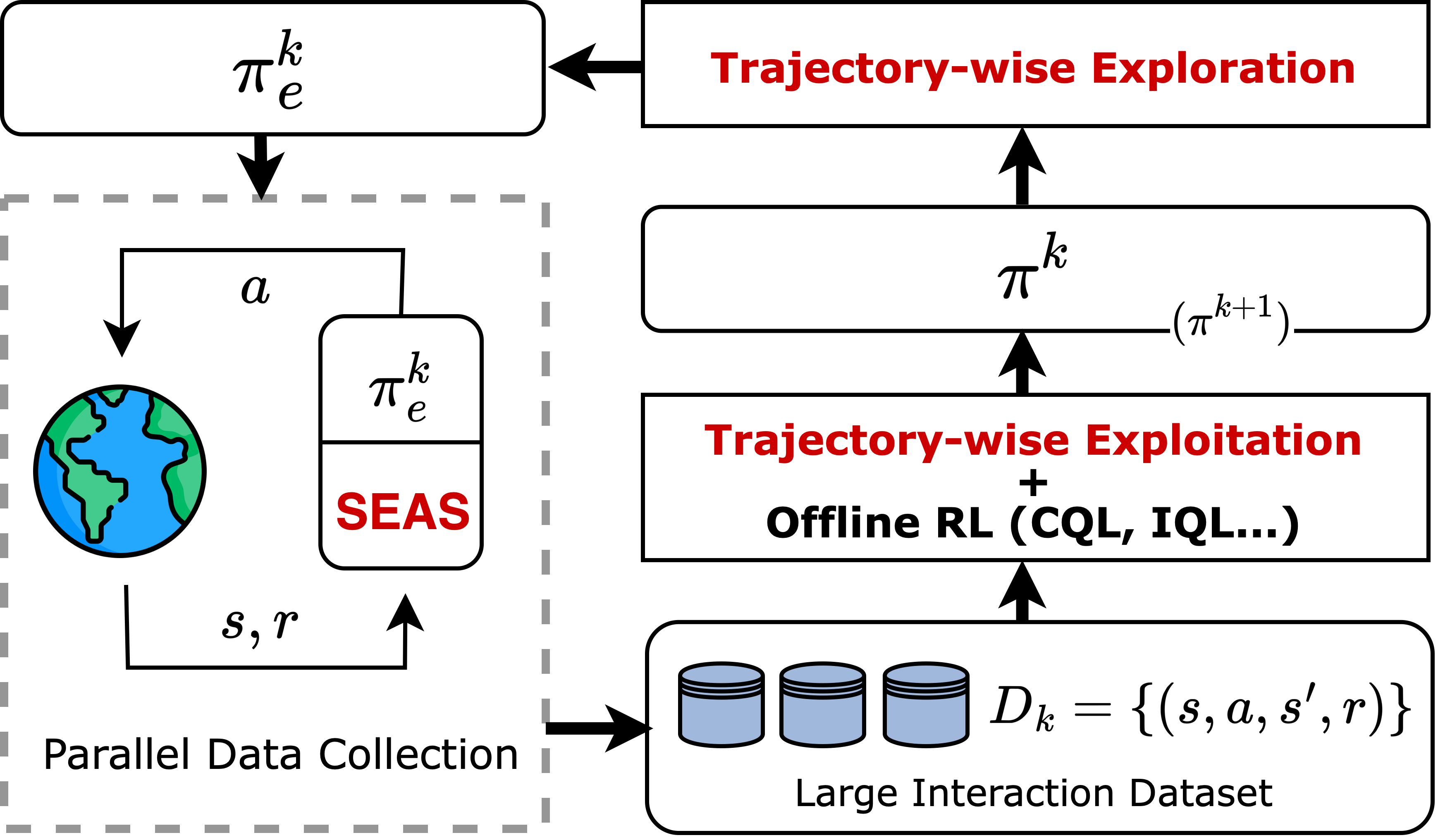}
    \caption{Iterative offline RL with Trajectory-wise Exploration and Exploitation (TEE) and Safe Exploration by Adaptive Action Selection (SEAS). Components proposed in this work are highlighted in red.}
    \label{fig:TEE}
\end{figure}
Apart from effectiveness, safety constraints must also be taken into consideration during online exploration in real-world advertising systems. Random exploration can lead to risky bidding behaviours, negatively impacting the performance of advertisers. The safety of an exploration policy is captured by a performance lower bound, which ensures that the performance drop brought by exploration is acceptable. 
Ensuring safe exploration often requires imposing constraints on the original exploration policy. However, existing safety-guaranteeing methods \cite{mou2022sustainable} often lack awareness of action qualities. While they prevent dangerous behaviours, they also restrict some high-quality actions. This may hinder the emergence of high-return trajectories during exploration, and subsequently affect the performance of the training process. 
To preserve data quality while ensuring safety, we propose SEAS, which dynamically determines the safe exploration action at each time step based on the cumulative rewards up to that step, as well as the predicted future return. By making adaptive decisions, SEAS preserves the quality of the collected dataset to the fullest extent, and achieves theoretically guaranteed safety at the same time.

The main contributions of this work are summarized as follows:
\begin{itemize}
\item We identify and demonstrate that the performance bottleneck of the current iterative offline RL paradigm for auto-bidding algorithms mainly lies in ineffective exploration and exploitation caused by the conservatism principle of offline RL algorithms.
\item We propose TEE, a solution for effective exploration and exploitation in iterative offline RL for auto-bidding. TEE comprises two components: PSN for trajectory-wise exploration, and a novel Robust Trajectory Weighting algorithm for trajectory-wise exploitation.
\item For safe exploration in auto-bidding, we design SEAS, which adaptively decides the safe exploration action for each time step based on the cumulative rewards till that step. SEAS can guarantee provable safety and sacrifice minimal performance in policy learning when working together with TEE.
\item Extensive experiments in both simulated environments and Alibaba display advertising platform demonstrate the effectiveness of our solution in terms of trained policy's performance, as well as the safety of the training process.
\end{itemize}

\section{Related Work}
{\itshape Reinforcement learning for auto-bidding.}
Bid optimization in online advertising is a sequential decision procedure, and can be solved via reinforcement learning techniques. Cai et. al. \cite{cai2017real} first formulated the auto-bidding problem as an MDP. Wu et. al.  \cite{wu2018budget} and He et. al.  \cite{he2021unified} leveraged reinforcement learning to optimize bidding policies under various constraints. All of the above works train their RL bidding agent in a simulated environment. Mou et. al.  \cite{mou2022sustainable} recognized the sim2real problem in auto-bidding, and designed an iterative offline RL framework to train the bidding policy online. 

{\itshape Offline RL.}
Offline RL  \cite{fujimoto2019off,levine2020offline,prudencio2023survey} (or batch RL) refers to the problem of policy optimization utilizing only previously collected data, without additional online interaction. 
Due to the distribution shift \cite{fujimoto2019off} problem that arises in offline RL, most algorithms conduct conservative policy learning, which compels the learning policy to stay close to the dataset. Various algorithms achieve this by directly regularizing the actor \cite{fujimoto2019off, kumar2019stabilizing, peng2019advantage, fujimoto2021minimalist}, learning conservative value functions  \cite{kumar2020conservative}, or constraining the number of policy improvement steps  \cite{brandfonbrener2021offline, kostrikov2021offline}. However, the conservatism principle causes the performance of trained policy to highly depend on the dataset quality.

{\itshape Dataset coverage in offline RL.}
Dataset coverage is a core factor that limits the performance of offline RL algorithms. Schweighofer et. al.  \cite{schweighofer2022dataset} conducted extensive experiments to demonstrate that dataset coverage is essential for offline RL algorithms to learn a good policy. Prior theoretical works on offline RL \cite{pmlr-v97-chen19e,rashidinejad2021bridging,pmlr-v178-zhan22a} also relied on datasets with sufficient state-action space coverage, which is often characterized by concentrability coefficients, to produce a strong performance guarantee. 

{\itshape Exploration in RL.}
Exploration is a crucial aspect of RL as it allows the agent to gather information about the environment. Traditional exploration strategies induce novel behaviours by random perturbations of actions, such as $\epsilon$-greedy \cite{sutton2018reinforcement} and entropy regularization \cite{williams1992simple}. To generate meaningful behavioural patterns for hard exploration tasks, several other approaches such as intrinsic reward-based exploration \cite{chentanez2004intrinsically, pathak2017curiosity}, count-based exploration \cite{bellemare2016unifying, ostrovski2017count} and PSN \cite{plappert2017parameter, fortunato2017noisy} have been proposed. However, most of the existing solutions have been proposed for the online RL paradigm, which is substantially different from iterative offline RL where data is collected for training offline RL algorithms.

\section{Preliminaries}
In this work, we consider auto-bidding with budget constraint, a sequential decision problem, where an advertiser submits bids for incoming ad impressions, aiming to maximize the total value within a fixed budget. For this problem, previous works \cite{zhang2014optimal, zhang2016optimal} showed that under the second price auction \cite{10.1257/aer.97.1.242}, the optimal bid $b^*$ on an impression is given by $b^* = v/\lambda$,
where $v$ represents the impression value (\emph{e.g.} click-through rate) and $\lambda$ is a scaling factor. However, determining the optimal value of $\lambda$ in real time is intractable due to its dependence on values and costs of all impressions in the stream.
Thus, we formulate the problem of adjusting bidding parameter $\lambda$ as a Markov Decision Process (MDP), defined by a tuple $(\mathcal S, \mathcal A, r, p,\gamma)$. In our formulation, an episode corresponds to a one-day ad campaign duration, which is divided into $T$ time steps. At each step $t$, the advertiser observes state $s_t\in \mathcal S$ and takes an action $a_t \in \mathcal A$. The state $s_t$ is a feature vector describing the advertising status of a campaign, which may contain time, remaining budget, budget consumption speed, and etc. The action $a_t$ is the bidding parameter $\lambda$ in time step $t$. The reward $r(s_t,a_t)$ is the total value of impressions won between time step $t$ and $t+1$, and $p(s_{t+1}|s_t,a_t)$ denotes the transition probability of states. Both $r$ and $p$ are determined by the advertising environment.  The discount factor $\gamma \in [0,1]$ accounts for the future rewards' diminishing impact. A (deterministic) policy $\pi\in\Pi:\mathcal S \to \mathcal A$ is a function defining the agent's bidding behaviour. When a policy interacts with the advertising environment over an episode, a trajectory $\tau=\{(s_{t}, a_{t}, s_{t+1},r_{t})\}_{t=0}^{T}$ is generated\footnote{For simplicity, we assume all trajectories' lengths are equal to episode length $T$, though a campaign may exhaust its budget at certain time $t_0<T$ and cannot afford any impression thereafter. In such cases, we let $r(s_t,a_t)=0, \forall t_0 \le t\le T$.}, where the initial state $s_0$ is drawn from a probability distribution $\rho$. 

The (discounted) return of trajectory $\tau$ is $R(\tau)=\sum_{t=0}^T \gamma^t r_t$. The objective of RL is to maximize the expected return:
\begin{displaymath}
  \argmax_{\pi\in\Pi} J(\pi) \coloneqq \mathbb{E}_{\tau}[R(\tau)].
\end{displaymath}

In RL, the state value function of a policy $\pi$ is defined as
\begin{displaymath}
  V^\pi(s) \coloneqq \mathbb E_\pi[\sum_{u=t}^T \gamma^u r_u |s_t=s], \quad \forall s\in \mathcal S.
\end{displaymath}

Similarly, the action value function is
\begin{displaymath}
  Q^\pi(s,a) \coloneqq \mathbb E_\pi[\sum_{u=t}^T \gamma^u r_u |s_t=s, a_t=a],\quad  \forall s\in \mathcal S, a\in \mathcal A.
\end{displaymath}

Iterative offline RL follows a cyclical pattern of data collection and offline policy update, repeatedly for a total of $K$ iterations.  Within each iteration $k\in [K]$, an exploration policy $\pi^k_e$ is constructed based on $\pi^k$. Then $\pi^k_e$ is deployed in the advertising environment to collect interaction dataset $D^k=\{\tau_i\}_{i=1}^N$ containing $N$ trajectories. An offline RL algorithm is then used to learn a policy from $D^k$, producing the updated policy $\pi^{k+1}$ for the subsequent iteration.

\textbf{Safety of Bidding Policies.} 
The performance of auto-bidding policies deployed in the advertising system must be guaranteed in either the stages of policy deployment or policy training. Therefore, safety of the RL training process is formally defined by a performance lower bound:
\begin{displaymath}
  J(\pi^k_e) \ge (1-\epsilon)J _s, \forall k=1,\cdots,K,
\end{displaymath}
where $J_s$ is the performance of a known safe policy. \footnote{Note that the exploration policy $\pi^k_e$ in every iteration $k$ should be ensured to be safe. To satisfy the safety constraint in the first iteration, the training process should be initialized by a known safe (but could be suboptimal) policy instead of a random policy. In practice, policies trained in a simulation \cite{he2021unified} could serve as an initial policy.}


\section{Performance Bottleneck of iterative offline RL}\label{problemEE}
\begin{figure}
    \centering
    \includegraphics[width=0.8\linewidth]{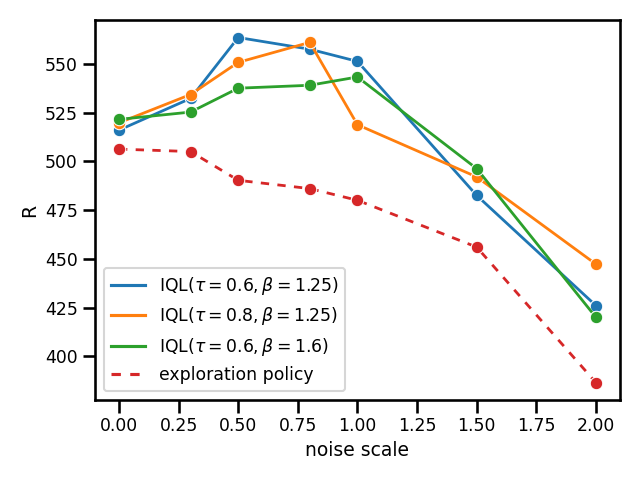}
    \caption{Performance of exploration policies with different noise scale, as well as the performance of policies trained with IQL on datasets collected by those exploration policies.}
    \label{fig:perf std}
\end{figure}
In this section, we present empirical observations regarding the performance bottleneck of iterative offline RL, and also introduce the idea of our proposed method.
In each iteration of iterative offline RL, the data-collection policy plays a crucial role in determining the input dataset for the subsequent offline training process, which in turn influences the trained policy. The data-collection process should gather sufficient information of the underlying MDP through effective exploration.

A conventional approach for exploration in iterative offline RL \cite{matsushima2020deployment} is adding noise perturbations to actions.
Concretely, an exploration policy with action space noise (ASN) is constructed as $\pi_e^{ASN} (s_t)=\pi(s_t)+ \epsilon_t$, where $\epsilon_t \sim \mathcal N (0, \sigma^2 I)$ is sampled from a Gaussian distribution with standard deviation $\sigma$. 
However, a highly random policy often leads to suboptimal performance and results in a dataset consisting primarily of low-quality actions. Consequently, such datasets pose challenges for offline RL algorithms to produce high-quality policies in the subsequent training process. This is because the conservatism principle of these algorithms drives the learned policy close to the low-performing exploration policy. 

We conduct experiments in a simulated environment (details provided in Appendix \ref{setup}) to illustrate the aforementioned issue. Based on one deterministic policy, we construct exploration policies by adding different levels of noise perturbation on the actions. These exploration policies are deployed in the environment to collect interaction datasets, which are then utilized by IQL \cite{kostrikov2021offline}, an offline RL algorithm, to train new policies. Figure \ref{fig:perf std} presents the performance of exploration policies with varying noise scale, as well as performance of the trained policies. We can observe that adding a certain amount of noise boosts the trained policy by gaining more environment information, while an excessively noisy exploration policy undermines the training result. Furthermore, the optimal noise scale depends on the training algorithm, making online tuning of the noise scale impractical.

One potential method for overcoming the difficulty brought by conservatism  is to manually identify high-quality behaviours from the noisy dataset and allow the learning algorithm to focus solely on these behaviours. However, evaluating the quality of individual actions within a dataset can be challenging. For one transition tuple $(s,a,r,s')$, a large reward $r$ does not necessarily indicate $a$ to be a good action, due to the influence of $s'$ on future rewards. 
Hopefully, if a full trajectory $\tau$ attains a high return, then it is reasonable to infer that this trajectory contains high-quality behaviours. In following sections, we present empirical findings to reveal that employing PSN instead of ASN for exploration leads to a dataset containing more high-return trajectories.

\section{Proposed Framework}
In this section, we present our novel design on iterative offline RL framework for auto-bidding, which comprises two key components: TEE and SEAS.

\subsection{Trajectory-wise Exploration}\label{section: T exploration}
We introduce Parameter Space Noise (PSN) \cite{plappert2017parameter, fortunato2017noisy} for exploration in the iterative offline RL framework, and explain its effectiveness from a trajectory view. PSN refers to injecting noise in an RL policy's parameter space in order to induce exploratory behaviours. It has brought performance gain on a wide range of control tasks when applied to online deep RL algorithms \cite{badia2020agent57,hessel2018rainbow}. For a parameterized policy (\emph{e.g.} a neural network) $\pi(s;\theta)$, where $\theta$ is the  parameter vector, applying additive Gaussian noise to $\theta$ gives $\hat \theta = \theta + \epsilon$, where $\epsilon \sim \mathcal N (0, \sigma^2 I)$. Then the exploration policy based on PSN is $\pi_e^{PSN}(s_t) = \pi(s_t;\hat \theta)$. Importantly, the perturbed parameter vector $\hat \theta$ is only sampled at the beginning of each episode and remains fixed afterwards. This is substantially different from ASN where independent noise is added at every time step.

We now present our key observation on datasets collected by PSN by experiments in a simulated bidding environment (details are provided in Appendix \ref{setup}). We first construct two exploration policies with ASN and PSN based on one policy $\pi$, and denote them by $\pi_e^{ASN}$ and $\pi_e^{PSN}$ respectively. Subsequently, two datasets $D^{ASN}$ and $D^{PSN}$ of equal size are collected with $\pi_e^{ASN}$ and $\pi_e^{PSN}$. \footnote{To ensure a fair comparison, we control the noise strength of both exploration policies to guarantee that the average return of $D^{ASN}$ and $D^{PSN}$ are equal.}  As shown in Figure \ref{fig:return dist}, the return distribution of trajectories in $D^{PSN}$ is more dispersed than that of $D^{ASN}$. Specifically, $D^{PSN}$ contains high-return trajectories (\emph{e.g.} trajectories with return higher than 750) which are almost absent in $D^{ASN}$. 

\begin{figure}
    \centering
    \includegraphics[width=0.8\linewidth]{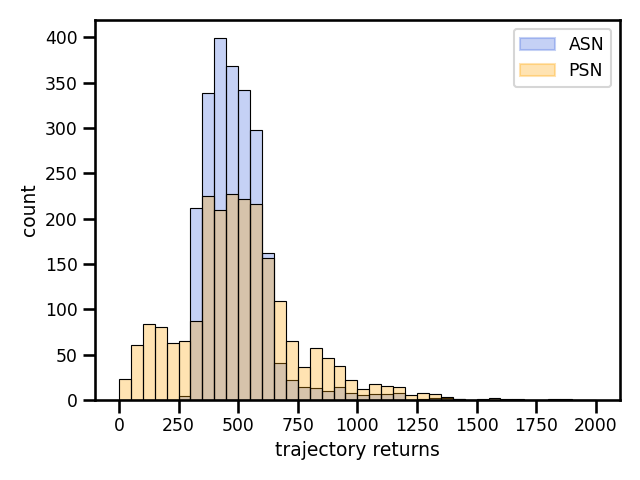}
    \caption{Comparison of trajectory return distributions of datasets collected by ASN and PSN.}
    \label{fig:return dist}
\end{figure}


The high return variance observed in PSN can be attributed to two main factors: decoupling between the exploration policy and the base policy (i.e. the original noiseless policy $\pi$), and the consistency of exploration behaviours. The following example provides further explanation in the context of auto-bidding, by highlighting the incapability of ASN on generating high-return trajectories. Assume that the current policy $\pi$ is suboptimal in that it produces relatively low bids across most states.
In this case, positive perturbations $\epsilon_t$ are desirable when applying ASN. However, when $\pi_e^{ASN}(s_t)$ is lifted, $\pi(s_{t+1})$ would become low since the base policy $\pi$ aims to maintain the smoothness of budget consumption, thereby offsetting the effect of high bid in step $t$. The counterbalancing actions of the base policy substantially prevent ASN from effectively exploring unknown areas. Another issue with ASN is that, since the perturbations $\epsilon_t$ of different time steps are drawn from independent probability distributions, consistently realizing positive $\epsilon_t$ throughout an entire episode is barely possible. 

PSN enables us to sample a variety of policy parameters, which could be distributed to a great number of advertising campaigns in a DSP to explore different bidding behaviours. These campaigns run in parallel and collect a large dataset covering a wide range of behaviours, thus achieving large-scale trajectory-wise exploration.

\subsection{Trajectory-wise Exploitation}
While PSN-based exploration policies could effectively generate valuable high-return trajectories, their collected datasets still consist primarily of inferior trajectories, as depicted in Figure \ref{fig:return dist}. Therefore, the trained policy's performance is still limited by the inherent conservatism of offline RL. To alleviate this problem and fully exploit the dataset, we propose Robust Trajectory Weighting for trajectory-wise exploitation.

We consider a dataset containing multiple trajectories $D=\{\tau_i\}_{i=1}^N$, where $\tau_i=\{(s_{i,t}, a_{i,t}, s_{i,t+1},r_{i,t})\}_{t=0}^{T}$. 
Inspired by  \cite{yue2022boosting, hong2023harnessing, yue2023offline}, instead of uniform sampling, we assign large sample probabilities to well-performing trajectories during training. A straightforward way to realize this is to assign weight $w_i$ for trajectory $\tau_i$ based on its return $R_i \coloneqq R(\tau_i)$, as high return typically indicates good behaviours. Nonetheless, two main issues arise when applying return-based trajectory weighting in the auto-bidding task. Firstly, due to the instability of auction environments and user feedbacks, the reward function $r(s_t, a_t)$ is highly stochastic, thus a high return might not necessarily be achieved by a good policy, but rather a "lucky" trial that happens to obtain high rewards in most time steps. Secondly, trajectories within the dataset may come from different advertising campaigns, and the intrinsic characteristics of a campaign, such as its budget level and item category, also affect the trajectory return. Thus, directly comparing trajectories from different campaigns by their returns is unfair. 

We address the first issue by learning a reward model $\bar r$ on the dataset for predicting the expected reward of a state-action pair:
\begin{equation*}
\bar r= \argmin_r \mathbb 
\displaystyle\sum_{i=1}^{N}
\sum_{t=0}^{T}
(r(s_{i,t},a_{i,t})-r_{i,t})^2 .
\end{equation*}
The reward model could be implemented by a neural network with the above loss function. Then the original stochastic rewards $r_{i,t}$ are replaced with their expectations $\bar r_{i,t} = \bar r(s_{i,t}, a_{i,t})$ to calculate $\bar R_i=\sum_{t=0}^{T} \gamma^t \bar r_{i,t}$, producing more robust quality indicators. 

To deal with the second issue mentioned above, we regularize the trajectory returns by subtracting the value of the initial state of the trajectory, estimated as $\hat V = \argmin_V \sum_{i=1}^N (V(s_{i,0})-\bar R_i)^2$. The initial state typically contains information (\emph{e.g.} the total budget) of the advertising campaign, therefore $\hat V(s_{i,0})$ provides estimation of the expected return of the campaign behind trajectory $i$. Our final indicator of trajectory quality is $\hat A_i$:
\begin{equation*}
\hat A_i = (\bar R_i - \hat V(s_{i,0}))/\hat V(s_{i,0}),\quad \forall 1\le i\le N.
\end{equation*}

The sample probability $w_{i,t}$ of transition tuple $(s_{i,t}, a_{i,t}, s_{i,t+1}, r_{i,t})$ is computed according to $\hat A_i$ as follows:
\begin{equation*}
w_{i,t} \propto \exp(\hat A_i/\alpha),\quad \forall 1\le i\le N, 0\le t\le T,
\end{equation*}
where $\alpha\in \mathbb R^+$ is a temperature parameter, and the weights should be normalized to ensure $\sum_{i=1}^{N}
\sum_{t=0}^{T} w_{i,t}=1$. 

After computing weights for dataset $D$, we could run any model-free offline RL algorithm (\emph{e.g.} CQL, IQL) using the reweighted data sampling strategy. We provide a theoretical justification of Robust Trajectory Weighting in Appendix \ref{proofRTW}, showing how it alleviates the problem brought by conservative algorithms.

\subsection{Safe Exploration}
Although TEE boosts the effectiveness of iterative offline RL, the safety of data-collecting policies, which is of great importance when training in real-world advertising systems, have not been considered. In this section, we propose a novel algorithm named SEAS to guarantee the safety of online exploration.

On the problem of safe exploration in auto-bidding, Mou et. al.  \cite{mou2022sustainable} designed a method based on safety zone, to restrict the exploratory actions around a safe policy. Specifically, $\pi_e(s_t)\gets clip(\pi_e(s_t), \pi_s(s_t)-\xi, \pi_s(s_t)+\xi)$, where $\pi_s$ is a known safe policy. Though this method is provably safe under some assumptions on the MDP, its safety heavily relies on the radius $\xi$ which is intractable to determine. Besides, this approach lacks awareness of the quality of original exploration actions, and poses constraint on both bad and good actions, thus hurting the quality of collected datasets. SEAS mitigates these problems through an adaptive design. The aim of SEAS is to prevent the low-performing trajectories caused by the original exploration policy (\emph{e.g.} $\pi_e^{PSN}$) to emerge, while preserving the high-quality ones to the fullest extent.

\begin{algorithm}[t!]
\caption{Safe Exploration by Adaptive Action Selection (SEAS)} \label{alg:SEAS}

\begin{algorithmic}[1]
\State \textbf{Input:} Exploration policy $\pi_e$, $n$ distinct safe policies $\{\pi_s^i\}_{i=1}^n$ and their state-action value function $\{ Q^{\pi_s^i}\}_{i=1}^n$ , safe performance $J_s$ and safety coefficient $\epsilon\in(0,1)$
\State \textbf{Initialize:} Sample initial state $s_0\sim \rho(s)$, $temp \gets 1$

\For{$t=0,1,\cdots, T$}
    \State $a_e \gets \pi_e(s_t), a_s\gets \pi_s^{temp}(s_t)$
    \State $temp \gets \argmax_i{Q^{\pi_s^i}(s_t,a_e)}$, $Q_{max}\gets Q^{\pi_s^{temp}}(s_t,a_e)$
    \If {$\sum_{u=0}^{t-1} r_u + Q_{max} \ge (1-\epsilon)J_s$}
        \State $a_t\gets a_e$
    \Else
        \State $a_t\gets a_s$
    \EndIf
    \State Take action $a_t$, observe $r_t, s_{t+1}$
\EndFor
\end{algorithmic} 
\end{algorithm}

The procedure of SEAS interacting with the environment for one episode is shown in Algorithm \ref{alg:SEAS}. In each step, SEAS selects between an exploratory action $a_e$ and a safe action $a_s$ according to the condition in line 6. 
Note that multiple safe policies are provided to the algorithm, and in each step the safe policy with maximum Q value is chosen for constructing the condition. 
Utilizing multiple safe policies instead of one loosens the restriction in line 6, thus better preserving exploratory actions of $\pi_e$. The algorithm is "adaptive" in the sense that each action $a_t$ depends on the rewards accumulated up to time $t$.

The following theorem shows that SEAS theoretically guarantees safety for any exploration policy $\pi_e$. Proof of Theorem \ref{thm:SEAS} is provided in Appendix \ref{proofSEAS}.
\begin{restatable}{theorem}{firstSEAS} 
\label{thm:SEAS}
For any policy $\pi_e$ and any $\epsilon \in (0,1)$, given safe policies $\{\pi_s^i\}_{i=1}^n$ that satisfy $J(\pi_s^i)\ge J_s,\forall 1\le i\le n$, the expected return $\mathbb E_\tau[R(\tau)]$ of trajectories generated by SEAS satisfies $\mathbb E_\tau[R(\tau)] \ge (1-\epsilon)J_s$. 
\end{restatable}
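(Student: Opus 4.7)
The plan is to prove Theorem~\ref{thm:SEAS} by constructing an inductive potential function that combines accumulated realized rewards with the expected future value under a safe fallback policy, and to show that this potential preserves the safety bound $(1-\epsilon)J_s$ across both branches of Algorithm~\ref{alg:SEAS}. Concretely, I would define $\mathcal{M}_t := \sum_{u=0}^{t-1} r_u + V^{\pi_s^{k(t)}}(s_t)$ for an adaptively chosen safe-policy index $k(t)$, with the convention that $V^{\pi_s^{k(T+1)}}(s_{T+1}) = 0$ at the terminal step so that $\mathcal{M}_{T+1} = R(\tau)$. Bounding $\mathbb{E}[\mathcal{M}_{T+1}]$ from below by $(1-\epsilon)J_s$ then suffices. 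The base case $\mathbb{E}[\mathcal{M}_0] = J(\pi_s^{k(0)}) \ge J_s$ is immediate from the initialization $k(0) = 1$ together with the hypothesis $J(\pi_s^i) \ge J_s$ for every $i$.

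The inductive step proceeds by conditioning on the history $\mathcal{F}_t$ and analyzing the two algorithmic branches. In the Case~A branch (condition holds, $a_t = a_e$), taking $k(t+1)$ equal to the updated $\text{temp}$ in line~5 yields
$$\mathbb{E}[\mathcal{M}_{t+1} \mid \mathcal{F}_t] \;=\; \sum_{u=0}^{t-1} r_u \;+\; Q^{\pi_s^{\text{temp}}}(s_t, a_e) \;=\; \sum_{u=0}^{t-1} r_u + Q_{\max} \;\ge\; (1-\epsilon)J_s,$$
which is exactly the condition that triggered Case~A. In the Case~B branch (condition fails, $a_t = a_s$), choosing $k(t+1)$ so that $a_s = \pi_s^{k(t+1)}(s_t)$ produces $\mathbb{E}[\mathcal{M}_{t+1} \mid \mathcal{F}_t] = \mathcal{M}_t$, making $\mathcal{M}_t$ a one-step martingale on this event. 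Applying the tower property and iterating the argument from $t=0$ to $t=T$ yields $\mathbb{E}[\mathcal{M}_{T+1}] \ge (1-\epsilon)J_s$, which is precisely the claimed safety bound $\mathbb{E}_{\tau}[R(\tau)] \ge (1-\epsilon)J_s$.

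The main obstacle I anticipate lies in tracking the index $k(t)$ consistently across the two branches. Because Algorithm~\ref{alg:SEAS} updates its $\text{temp}$ variable unconditionally (as $\argmax_i Q^{\pi_s^i}(s_t, a_e)$), while the invariant demands that $k(t+1)$ align with whichever safe policy is actually employed at the next step, some care is needed to show that the two case-specific choices of $k(t+1)$ are compatible with the algorithm's internal state so that the induction closes; the cleanest way is probably to let $k(t)$ denote the safe-policy index whose action the algorithm would use if Case~B were to fire at step $t$. Exploiting multiple safe policies is precisely what makes the Case~A bound as tight as possible: the $\argmax_i$ over safe $Q$-values produces the weakest triggering condition, thereby permitting $\pi_e$ to be accepted at as many steps as possible without breaking the potential's lower bound, which explains why Algorithm~\ref{alg:SEAS} improves over a single-policy safety-zone baseline.
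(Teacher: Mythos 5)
Your strategy is genuinely different from the paper's: the paper partitions trajectories by the \emph{last} step $t_0$ at which the exploratory action is taken, writes $\mathbb E_\tau[R(\tau)]$ as a sum over $(s_{t_0},a_{t_0},temp_{t_0})$, treats everything after $t_0$ as a pure safe-policy rollout worth $Q^{\pi_s^i}(s_{t_0},a_{t_0})$, and invokes the line-6 condition exactly once; you instead run a forward potential argument. Your two one-step computations are individually correct (your Case-A identity is precisely the paper's key inequality), but the concluding step --- ``applying the tower property and iterating from $t=0$ to $T$'' --- does not close, and this is a genuine gap rather than a presentational one. The two branch facts are of different types: Case A lower-bounds the \emph{conditional expectation} $\mathbb E[\mathcal M_{t+1}\mid\mathcal F_t]$ by $(1-\epsilon)J_s$, whereas Case B propagates the \emph{realized} value $\mathcal M_t$. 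After a Case-A step, $\mathcal M_{t+1}$ can fall strictly below $(1-\epsilon)J_s$ on a positive-probability branch (a bound on the conditional mean does not control the lower tail); subsequent Case-B steps then carry that deficit untouched to the horizon, while on the complementary branch any surplus above $(1-\epsilon)J_s$ is discarded the next time Case A fires, since Case A re-derives only the bare threshold and not $\mathcal M_t$. The invariant you would actually need to iterate is $\mathbb E[\min(\mathcal M_{t+1},(1-\epsilon)J_s)\mid\mathcal F_t]\ge\min(\mathcal M_t,(1-\epsilon)J_s)$, and in Case A this requires pulling $\min$ inside a conditional expectation, which Jensen gives in the wrong direction. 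Concretely, with one safe policy and two decision steps: let the exploratory action at $s_0$ satisfy $Q^{\pi_s}(s_0,a_e)=J_s$ (so Case A fires) with zero immediate reward, transitioning with probability $1/2$ to a state $b$ with $V^{\pi_s}(b)=0$ (where Case B fires and yields $0$) and with probability $1/2$ to a state $g$ with $V^{\pi_s}(g)=2J_s$, at which the exploratory action with $Q^{\pi_s}(g,a_e)=(1-\epsilon)J_s$ is again accepted; the expected return is $\tfrac12(1-\epsilon)J_s$, so $\mathbb E[\mathcal M_{T+1}]\ge(1-\epsilon)J_s$ cannot be derived this way. Some additional ingredient (almost-sure rather than in-expectation control of $\mathcal M_{t+1}$ in Case A, e.g.\ via deterministic dynamics) is needed before the induction can be run.

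A secondary, more mechanical problem is the index-tracking issue you flagged yourself, and your proposed fix does not resolve it: line 5 updates $temp$ unconditionally, so if Case B fires at two consecutive steps the algorithm plays $\pi_s^{temp_{t-1}}$ at step $t$ and $\pi_s^{temp_t}$ at step $t+1$, with $temp_t$ re-chosen as $\argmax_i Q^{\pi_s^i}(s_t,\pi_e(s_t))$. Your Case-B martingale identity requires the Bellman equation of a \emph{single} safe policy across the step, i.e.\ $k(t+1)=k(t)$, while your convention (``the index the algorithm would fall back to at step $t$'') forces $k(t+1)=temp_t\neq k(t)$ in general. As written, the equality $\mathbb E[\mathcal M_{t+1}\mid\mathcal F_t]=\mathcal M_t$ in Case B is therefore not established either.
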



The advantages of SEAS are summarized as follows: (i) SEAS needs only one hyperparameter $\epsilon$, which is in the definition of safety and is straightforward to set. (ii) The safety of SEAS is provable without additional assumptions on the underlying MDP. (iii) Experiments demonstrate that when functioning together with TEE, SEAS exhibits minimal performance sacrifice compared to baseline methods.

TEE and SEAS together form an iterative RL framework for auto-bidding. Appendix~\ref{app:framework} provides a detailed description of the overall procedure.

\section{Experiments}
We provide empirical evidence for the effectiveness of our approach by both simulated experiments and real-world experiments on Alibaba display advertising platform. 
\subsection{Overall Performance in a Simulated Environment}\label{sec:overall}
A simulated advertising system is constructed for all the offline experiments in this work. Details of the setup and hyperparameters are provided in Appendix \ref{setup}. 

We test the effectiveness of TEE and SEAS by combining them with three different offline RL algorithms. We also implement three baselines: iterative versions of IQL with and without exploration, as well as SORL \cite{mou2022sustainable}. An expert policy is trained with TD3 \cite{pmlr-v80-fujimoto18a}, an online RL algorithm, to serve as a performance upper bound. Methods for comparison are listed below.
\begin{itemize}
\item  \textbf{TEE+SEAS+IQL \cite{kostrikov2021offline} / CQL \cite{kumar2020conservative} / TD3BC \cite{fujimoto2021minimalist}.} Iterative offline RL with the proposed TEE and SEAS, using IQL / CQL / TD3BC as the offline RL algorithm.
\item  \textbf{IterIQL+ASN.} Iterative offline RL using IQL as the offline RL algorithm. IQL is selected as a representative because it is the state-of-the-art model-free offline RL algorithm. Exploration policy $\pi_e^k$ is constructed by adding ASN on $\pi^k$.
\item  \textbf{IterIQL.} Iterative offline RL using IQL as the offline RL algorithm. No exploration noise is added, therefore $\pi_e^k=\pi^k$.
\item  \textbf{SORL \cite{mou2022sustainable}.} SORL follows the iterative offline RL framework. The authors proposed V-CQL for offline policy training and designed an SER policy for safe and efficient exploration.
\end{itemize}

\textbf{Evaluation Metrics.} We evaluate the performance of a policy in terms of expected return. Besides, we check the safety of exploration policy by comparing average return in the collected dataset with the safety threshold $(1-\epsilon)J_s$.

\begin{figure}[t]
    \centering
    \includegraphics[width=\linewidth]{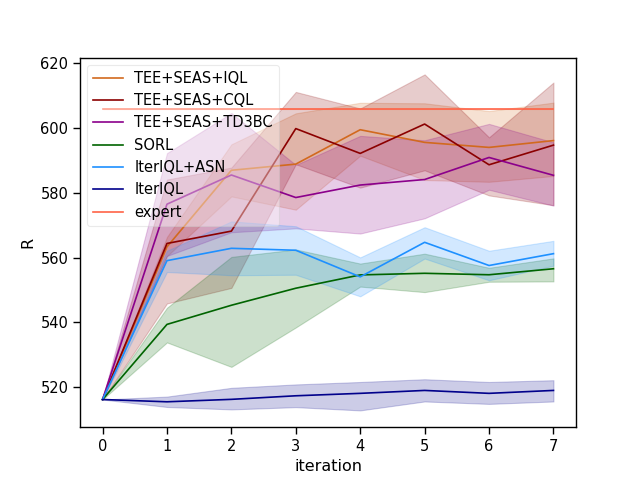}
    \caption{Overall performance in a simulated environment.}
    \label{fig:perf overall}
\end{figure}

\begin{figure}[t]
    \centering
    \includegraphics[width=\linewidth]{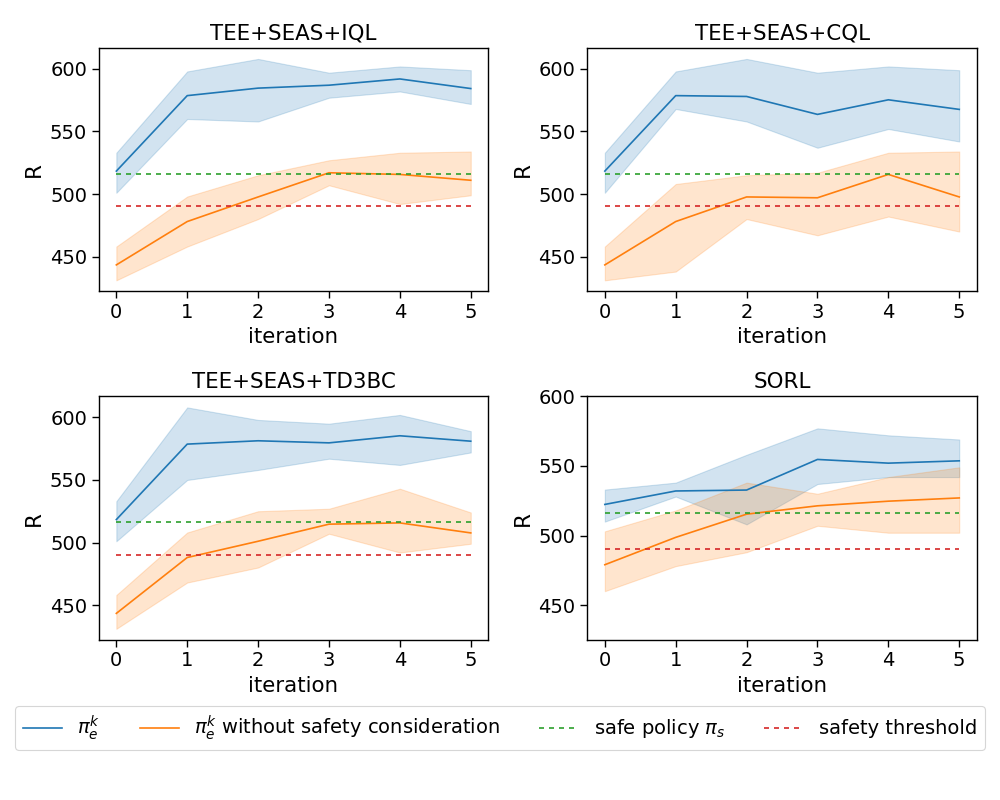}
    \caption{Safety constraint satisfaction.}
    \label{fig:perf overall safe}
\end{figure}

Figure \ref{fig:perf overall} presents the overall performance through iterations. Our proposed framework, combined with any of the three offline RL algorithms, substantially outperforms the baseline methods and achieves near-expert performance in approximately 5 iterations. Both Iterative IQL+ASN and SORL get stuck in suboptimal policies after limited performance improvements, suffering from the performance bottleneck we discussed in section \ref{problemEE}. Additionally, Iterative IQL without exploration achieves barely any performance improvement, which demonstrates the necessity of exploration in iterative offline RL.

We present the performance of exploration policies during the training process in Figure \ref{fig:perf overall safe}. Combined with any offline RL algorithm, our framework consistently ensures the performance of exploration policies to be above the safety threshold $(1-\epsilon)J_s$, which validates the safety guarantee ability of SEAS. We also show the results when SEAS is omitted, in which the safety constraint is violated in early stages.

\subsection{Online Experiments}
\begin{table}[t]
\caption{Performance of our proposed method in real-world experiments.}
  \label{tab:real}
\begin{tabular}{@{}cccccc@{}}
\toprule
iteration & BuyCnt & ROI & CPA & ConBdg & GMV \\ \midrule
1       & +1.82\% & +2.64\% & -1.81\% & -0.03\% & +2.61\%     \\
2       & +2.21\% & +2.94\% & -2.14\% & +0.02\% & +2.95\%     \\
3       & +2.94\% & +3.33\% & -1.78\% & +1.12\% & +4.49\%     \\
4       & +3.59\% & +2.44\% & -2.60\% & +0.90\% & +3.36\%     \\
\bottomrule
\end{tabular}
\end{table}
We conduct real-world experiments on Alibaba display advertising platform. In each iteration, we utilize 20000 advertising campaigns to collect data for an epsiode, and employ IQL for offline training. Each episode contains 48 time steps, which means that the bidding parameter is adjusted every 30 minutes. After each policy update, we conduct a 10-day online A/B test using 1500 campaigns.

\textbf{Evaluation Metrics.} The return of the trained policy acts as our main metric of performance, and is referred to as BuyCnt in the online experiments. Additionally, we introduce several other metrics that are commonly used in the auto-bidding field. 
\begin{itemize}
    \item \textbf{BuyCut.} The total value of ad impressions won by the advertiser. Our objective $J$ in the RL formulation.
    \item \textbf{ROI.} The ratio between the total revenue and the consumed budget of the advertiser.
    \item \textbf{CPA.} Cost per aquisition, defined as the average cost for each successfully converted impression. A smaller CPA indicates a better performance of an auto-bidding policy.
    \item \textbf{ConBdg.} The total consumed budget of the advertiser.
    \item \textbf{GMV.} Gross Merchandise Volume, the total amount of sales over the campaign duration.
\end{itemize}

Table \ref{tab:real} shows the performance of our method in each iteration, compared with a static baseline policy trained by CQL\cite{kumar2020conservative} on a pre-collected dataset. We can see that the BuyCnt of our policy improves steadily through iterations, and our method consistently outperforms the baseline across all metrics. 
\subsection{Ablation Studies}
We conduct ablation studies for a deeper analysis of the how different components work with each other in our method. Specifically, we aim to answer the following questions: (1) Do trajectory-wise exploration and trajectory-wise exploitation operate in close conjunction instead of being two independent components? (2) Does the reward model effectively reduce the influence of stochastic rewards in Robust Trajectory Weighting? (3) Does SEAS achieve the theoretical safety bound in practice? (4) While achieving the same safety bound, does SEAS sacrifices less performance than other baseline safety-guaranteeing methods?

To answer these questions, we conduct extensive experiments in the simulated environment described in Section \ref{sec:overall}.

\textbf{To answer Question 1.} We develop variants of TEE to delve deeper into how trajectory-wise exploration and trajectory-wise exploitation work together. We focus on one data-collection process followed by one offline training process. We omit SEAS in this experiment, to focus solely on TEE. An IQL policy is used as base policy $\pi$. Details of the variants are presented as follows.
\begin{itemize}
\item  \textbf{TEE} is implemented as described.
\item  \textbf{w/o T-explore} removes trajectory-wise exploration (i.e. PSN). Instead, traditional ASN is used for $\pi_e$. To ensure a fair comparison, we control the strength of ASN to guarantee that the average return in $D$ are equal to that of PSN.
\item  \textbf{w/o T-exploit} removes trajectory-wise exploitation (i.e. Robust Trajectory Weighting). After collecting $D$ by PSN, we train a new policy with uniform sampling.
\item  \textbf{w/o TEE} removes TEE. ASN is used for exploration, and uniform sampling for training.
\end{itemize}
\begin{table*}[t]
\caption{Ablation study on TEE. The last column presents the performance improvement over base policy. Best performance of each column is marked as bolded.}
  \label{tab:abTEE}
\begin{tabular}{@{}ccccccc@{}}
\toprule
\multirow{2}{*}{Ablation Settings} & \multicolumn{5}{c}{Budget} & \multirow{2}{*}{\itshape\#Improve}\\ \cmidrule(l){2-6} 
                & 1500  & 2000 & 2500 & 3000 & avg \\ \midrule
Base policy     &448.39&514.89&580.18&657.96&550.36&-\\ \midrule
TEE             &\textbf{490.19$\pm$13.38} & \textbf{569.35$\pm$5.89} & \textbf{634.88$\pm$31.88} & $671.68\pm59.69$ & \textbf{591.53$\pm$24.79} &\textbf{+7.48\%}\\
w/o T-explore   &$431.09\pm11.4$ & $520.12\pm12.05$ & $594.14\pm13.46$ & $663.18\pm14.97$ & $552.13\pm7.68$ & +0.32\%\\
w/o T-exploit   &$365.53\pm10.55$ & $414.21\pm29.05$ & $495.89\pm39.82$ & $571.24\pm51.27$ & $461.72\pm29.96$ &-16.10\%\\
w/o TEE         &$414.38\pm30.74$ & $515.66\pm10.35$ & $607.47\pm8.91$ & \textbf{678.67$\pm$17.53} & $554.04\pm7.13$ &+0.66\%\\ \bottomrule
\end{tabular}
\end{table*}

Table \ref{tab:abTEE} presents the performance of policies trained under different settings, and their performance gain over the base policy. We observe that TEE achieves significant performance improvement over the base policy under various budgets, while the absence of either component hurts its effectiveness. 
Interestingly, eliminating Trajectory-wise Exploitation leads to a substantial performance degradation. The reason behind this is that datasets collected by PSN contains a large fraction of undesirable actions, which are imitated by conservative algorithms. 
The above observation indicates that Trajectory-wise Exploration and Trajectory-wise Exploitation are inherently interconnected components, and their combination contributes substantially to the enhancement of the algorithm's performance.

\textbf{To answer Question 2.} We test the effectiveness of reward model in environments with different degrees of stochasticity. We first construct simulated environments with various stochasticity by controlling the variance of impression numbers per time step. In each environment, we collect an exploratory dataset with the same policy, and use two different data sampling strategies to train a policy on the dataset: a) Our proposed robust trajectory weighting. b) Trajectory weighting with the raw rewards $r_{i,t}$ instead of the reward model's prediction $\bar r_{i,t}$. We compare the performance of trained policies, to examine how reward model benefits the trajectory weighting method.
\begin{figure}[t]

  \centering
  \begin{adjustbox}{valign=m}
  \begin{subfigure}[b]{0.6\linewidth}
    \centering
    \includegraphics[width=\linewidth]{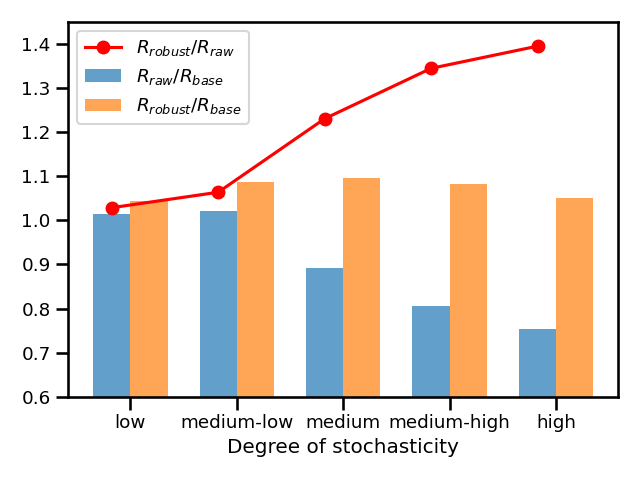}
    \caption{Improvements of policy performance.}
    \label{fig:rwdmdl}
  \end{subfigure}
  \end{adjustbox}%
  \begin{adjustbox}{valign=m}
  \begin{subfigure}[b]{0.4\linewidth}
    \centering
    \begin{tabular}{|>{\small}c|>{\small}c|}
      \hline
      Degree & $\#$ of Impr. \\
      \hline
      low & 175 \\
      medium-low & $U_{[144,206]}$ \\
      medium & $U_{[113,237]}$ \\
      medium-high & $U_{[82,268]}$ \\
      high & $U_{[50,300]}$ \\
      \hline
    \end{tabular}
    \caption{Settings of different environments.}
    \label{tab:setting}
  \end{subfigure}
  \end{adjustbox}
  \caption{The effectiveness of reward model in environments with different degrees of stochasticity.}
\end{figure}

The results of the experiments are shown in Figure~\ref{fig:rwdmdl}, where $R_{robust}$ denotes the return of policy trained with Robust Trajectory Weighting, $R_{raw}$ represents the return of policy trained with raw rewards, and $R_{base}$ the return of the data-collecting policy. Figure~\ref{tab:setting} shows the probability distributions of the impression number, where $U_{[a,b]}$ denotes a uniform distribution over $[a,b]$. The red line in Figure~\ref{fig:rwdmdl} indicates that the reward model is increasingly useful as the stochasticity of the environment intensifies. Moreover, the blue bars in the figure exhibit values lower than 1 in high stochasticity instances, which reflects that raw stochastic rewards could be misleading signals for trajectory weighting.

\textbf{To answer Question 3.} We fully evaluate the safety-ensuring ability of SEAS by setting different values of $\epsilon$ and observe the rate of performance decrease $1-J(\pi_e^k)/J_s$, where $\pi_e^k$ is the policy produced by SEAS. We expect the safety constraint $1-J(\pi_e^k)/J_s \le \epsilon$ to be satisfied. In this experiment, we take a USCB \cite{he2021unified} policy as the safe policy, and obtain its $Q$ function through fitting a dataset collected by itself. 
\begin{table}[t]
\caption{Safety-ensuring ability of SEAS.}
  \label{tab:SEAS safe}
\begin{tabular}{@{}ccccccc@{}}
\toprule
$\epsilon$ & 0.4 & 0.3 & 0.2 & 0.1 & 0.05 & 0.01 \\ \midrule
$1-J(\pi_e^k)/J_s$ & 0.202  & 0.137  & 0.039  & -0.002  & -0.004  &  -0.005  \\
\bottomrule
\end{tabular}
\end{table}

From Table \ref{tab:SEAS safe}, we observe that SEAS consistently satisfies the safety constraint over a wide range of input $\epsilon$. Interestingly, for small $\epsilon$ values, the exploration policy generated by SEAS even outperforms the base policy.

\textbf{To answer Question 4.} We compare SEAS with two baseline safety-ensuring methods in terms of performance sacrificing, while ensuring safety to the same degree ($\epsilon=0.05$). One USCB \cite{he2021unified} policy is leveraged as the safe policy. In this experiment, we start from an IQL policy, preserve the design of TEE, and substitute SEAS with baselines presented below. Strength of PSN is $\sigma=0.05$.

\begin{itemize}
\item  \textbf{Small noise.} We limit the strength of PSN at a low level, by setting $\sigma=0.01$. 
\item  \textbf{Fixed range.} As proposed in  \cite{mou2022sustainable}, the safe exploratory action $a_e$ is given by $a_e \gets clip(\pi_e(s_t), \pi_s(s_t)-\xi, \pi_s(s_t)+\xi)$. The safe policy $\pi_s$ is the same as that in SEAS, and $\xi$ is set to $0.1$.
\end{itemize}

\begin{table}[t]
\caption{Different safety ensuring methods' impact on performance of trained policy. The third column presents the performance improvement of trained policy over base policy.}
  \label{tab:SEAS}
\begin{tabular}{@{}ccc@{}}
\toprule
Safe Exploration Methods & Return & \itshape\#Improve \\ \midrule
No constraint                      & $594.25\pm 18.28$ &    +15.16\%     \\
\midrule
SEAS                      & \textbf{572.35$\pm$ 20.42} &    \textbf{+10.92\%}     \\
Small noise              & $528.07\pm 19.45$ &     +2.34\%    \\
Fixed range              & $532.91\pm 6.74$ &     +3.28\%    \\ 
\bottomrule
\end{tabular}
\end{table}
The results are presented in Table \ref{tab:SEAS}. We also show the result of not imposing any safety constraint, which is a performance upper bound. SEAS significantly outperforms the two baselines in terms of the average return of the trained policy. This observation suggests that the adaptive design of SEAS allows for minimal performance sacrifice.

\section{Conclusion}
In this work, we have presented a new iterative RL framework for auto-bidding from a trajectory perspective. We also pay particular attention on the safety of exploration in online advertisement systems, and propose SEAS for this issue. Through comprehensive experiments, our method has been shown to be effective, achieving superior results compared to other baselines. In future work, we plan to test the effectiveness of TEE in other fields such as recommender systems and healthcare, where online policy training is challenging but urgently needed.
\begin{acks}
This work was supported in part by National Key R\&D Program of China (No. 2022ZD0119100), in part by China NSF grant No. 62322206, 62132018, 62025204, U2268204, 62272307, 62372296, 61972-254, 61972252, in part by Alibaba Group through Alibaba Innovative Research Program. The opinions, findings, conclusions, and recommendations expressed in this paper are those of the authors and do not necessarily reflect the views of the funding agencies or the government.
\end{acks}

\bibliographystyle{ACM-Reference-Format}
\balance
\bibliography{sample-base}

\appendix
\section{Theoretical Justification of Robust Trajectory Weighting}\label{proofRTW}
We formally show that under the assumption of deterministic transition and stochastic reward, applying Robust Trajectory Weighting is equivalent to training offline RL on a dataset collected by a better behaviour policy. 

For a dataset $D=\{\tau_i\}_{i=1}^N$  where $\tau_i=\{(s_{i,t}, a_{i,t}, s_{i,t+1},r_{i,t})\}_{t=0}^{T}$. Each trajectory $\tau_i$ is collected by a different deterministic policy $\pi_i$, as in the case of Trajectory-wise Exploration. The behaviour policy $\pi$ of $D$ is then defined as sampling a policy from $\{\pi_i\}_{i=1}^N$ uniformly at the start of an episode, then acting according to the sampled policy till the end of the episode. Similarly, we define a weighted behaviour policy $\pi'$ as first sampling a policy from $\{\pi_i\}_{i=1}^N$ according to probabilities $\{w_i\}_{i=1}^N$, then acting with it. We aim to show that $J(\pi') \ge J(\pi)$.

The performance $J(\pi)$ could be expressed as expectation of value function over all possible initial states:
\begin{equation} \label{eq:J_V}
J(\pi)=\mathbb E_{s_0\sim\rho}[V^\pi(s_0)], J(\pi')=\mathbb E_{s_0\sim\rho}[V^{\pi'}(s_0)].
\end{equation}

For any initial state $s_0$, let $G_{s_0} = \{i|s_{i,0}=s_0\}$, $N_{s_0}=|G_{s_0}|$, we have
\begin{equation} \label{eq:vpi}
V^\pi(s_0) = \sum_{i\in G_{s_0}} V^{\pi_i}(s_{i,0}) / N_{s_0}.
\end{equation}

Similarly, for weighted behaviour policy $\pi'$,
\begin{equation} \label{eq:vpiprime}
V^{\pi'}(s_0) = \sum_{i\in G_{s_0}} w_iV^{\pi_i}(s_{i,0}) / \sum_{i\in G_{s_0}} w_i.
\end{equation}

Assuming deterministic transition and stochastic rewards of the underlying MDP, we have $V^{\pi_i}(s_{i,0})=\bar R_i$, where $\bar R_i=\sum_{t=0}^{T} \gamma^t \bar r_{i,t}$ is the relabeled return in Robust Trajectory Weighting. Plugging it in \eqref{eq:vpi} and \eqref{eq:vpiprime} gives us

\begin{equation} \label{eq:vpiR}
V^\pi(s_0) = \sum_{i\in G_{s_0}} \bar R_i / N_{s_0},
\end{equation}

\begin{equation} \label{eq:vpiprimeR}
V^{\pi'}(s_0) = \sum_{i\in G_{s_0}} w_i \bar R_i / \sum_{i\in G_{s_0}} w_i.
\end{equation}

Subtracting \eqref{eq:vpiR} by \eqref{eq:vpiprimeR},

\begin{align}
&V^{\pi'}(s_0)-V^\pi(s_0) \\
= &\sum_{i\in G_{s_0}} w_i\bar R_i / \sum_{i\in G_{s_0}} w_i - \sum_{i\in G_{s_0}}\bar R_i / N_{s_0}\\
 =&\frac 1 {N_{s_0}\sum_{i\in G_{s_0}} w_i} {(N_{s_0} \sum_{i\in G_{s_0}} w_i \bar R_i - (\sum_{i\in G_{s_0}} w_i)(\sum_{i\in G_{s_0}} \bar R_i))} \\
=&\frac 1 {N_{s_0}\sum_{i\in G_{s_0}} w_i} {(\sum_{i,j\in G_{s_0},i<j} (w_i-w_j)(\bar R_i-\bar R_j))}. \label{eq:rearrangement}
\end{align}

In \eqref{eq:rearrangement}, each term $(w_i-w_j)(\bar R_i-\bar R_j)$ inside the summation is non-negative, because 
\begin{equation*}
    w_i = \exp((\bar R_i/V(s_{i,0})-1)/\alpha)/Z,
\end{equation*}
where $Z=\sum_{i=1}^N w_i$ is a normalization term and $\alpha\in \mathbb R^+$, is non-decreasing with respect to $\bar R_i$. Therefore, $V^{\pi'}(s_0)-V^\pi(s_0)\ge 0$ for every initial state $s_0$. Then applying \eqref{eq:J_V} gives $J(\pi')\ge J(\pi)$.

The above derivation also highlights the significance of our proposed reward model. Relabeling rewards with the reward model's output makes $V^{\pi_i}(s_{i,0})=\bar R_i$, allowing us to deal with stochastic rewards.
\section{Proof of Theorem \ref{thm:SEAS}} \label{proofSEAS}

\firstSEAS*

\begin{proof}
In each time step, SEAS takes either action $a_e$ or $a_s$. For a trajectory $\tau$ generated by SEAS, let $t_0$ be the last step it takes exploratory action $a_e$. For those trajectories where it never takes $a_e$, set $t_0$ to 0. Let $temp_{t_0}$ denote the value of $temp$ at step $t_0$. Let $\mathbb 1(\tau, s, a, i)$ be the indicator function of $s_{t_0}, a_{t_0}$ and $temp_{t_0}$ for trajectory $\tau$, defined as follows:
\begin{equation*}
\mathbb 1 (\tau, s, a, i)=
    \left\{
        \begin{aligned}
            \delta (s-s_{t_0}, a-a_{t_0}),& \text{ if } i=temp_{t_0}  \\
            0, &\text{ otherwise,}
    	\end{aligned}
    \right.
\end{equation*}
where $\delta(\cdot)$ is the Dirac delta function. Therefore
\begin{equation}\label{eq:dirac}
\forall \tau, \quad \sum_{i=1}^n \int_{\mathcal S} \int_{\mathcal A} \mathbb 1(\tau,s,a,i) da ds=1.
\end{equation}
Multiply both sides of \eqref{eq:dirac} by $R(\tau)$,
\begin{equation*}
\forall \tau, \quad \sum_{i=1}^n \int_{\mathcal S} \int_{\mathcal A} R(\tau) \mathbb 1(\tau,s,a,i) da ds= R(\tau).
\end{equation*}
Take expectation with respect to $\tau$ on both sides,
\begin{align*}
    \mathbb E_\tau[R(\tau)] 
    &= \sum_{i=1}^n \int_{\mathcal S} \int_{\mathcal A} \mathbb E_\tau[R(\tau)\mathbb 1(\tau,s,a,i)]da ds.
\end{align*}
Split trajectory $\tau$ by $t_0$, and define $R(\tau^-)=\sum_{u=0}^{t_0-1} r_u$, $R(\tau^+)=\sum_{u={t_0}}^T r_u$, then
\begin{align*}
    \mathbb E_\tau[R(\tau)] 
    &= \sum_{i=1}^n \int_{\mathcal S} \int_{\mathcal A} \mathbb E_\tau[(R(\tau^-)+R(\tau^+))\mathbb 1(\tau,s,a,i)]da ds \\
    &= \sum_{i=1}^n \int_{\mathcal S} \int_{\mathcal A} \{ \mathbb E_\tau[R(\tau^-)\mathbb 1(\tau,s,a,i)] \\
    &\qquad \qquad \quad+\mathbb E_\tau[R(\tau^+)\mathbb 1(\tau,s,a,i)] \} da ds. 
\end{align*}
For trajectory $\tau$ , since $t_0$ is the last time it takes $a_e$, all actions after $t_0$ follows safe policy $\pi_s^{temp_0}$. Therefore $E_\tau[R(\tau^+)\mathbb 1(\tau,s,a,i)]=Q^{\pi_s^i}(s,a)\mathbb 1(\tau,s,a,i)$, then
\begin{align*}
    \mathbb E_\tau[R(\tau)] 
    &= \sum_{i=1}^n \int_{\mathcal S} \int_{\mathcal A} \{ \mathbb E_\tau[R(\tau^-)\mathbb 1(\tau,s,a,i)] \\
    &\qquad \qquad \quad +Q^{\pi_s^i}(s,a)\mathbb 1(\tau,s,a,i) \} da ds \\
    &= \sum_{i=1}^n \int_{\mathcal S} \int_{\mathcal A} \mathbb E_\tau[(R(\tau^-)+Q^{\pi_s^i}(s,a))\mathbb 1(\tau,s,a,i)] da ds \\
    &\ge \sum_{i=1}^n \int_{\mathcal S} \int_{\mathcal A} \mathbb E_\tau[(1-\epsilon)J_s \mathbb 1(\tau,s,a,i)] da ds \\
    &= \mathbb E_\tau[\sum_{i=1}^n \int_{\mathcal S} \int_{\mathcal A} (1-\epsilon)J_s \mathbb 1(\tau,s,a,i) da ds] \\
    &= \mathbb E_\tau[(1-\epsilon)J_s] \\
    &= (1-\epsilon)J_s, \\
\end{align*}
where the inequality step is by line 6 in algorithm \ref{alg:SEAS}, and the following steps are from \eqref{eq:dirac} and simple algebra.
\end{proof}

\section{Overall Iterative Framework} \label{app:framework}
TEE and SEAS are combined to form an iterative framework for online policy training in auto-bidding. The training process is initialized with the current policy running in the bidding system, which is typically suboptimal but safe. In each iteration $k$, we employ PSN for exploration based on the current policy $\pi^k$. We pick a large number of advertising campaigns, and independently sample parameter vectors for different campaigns' policies. The exploration policies are input to SEAS to guarantee safety. A subset of previous policies $\{\pi^\kappa\}_{\kappa=1}^{k-1}$ can be utilized as safe policies for SEAS, and $\epsilon$ is a pre-defined constant parameter through iterations. The Q functions of safe policies for SEAS could be obtained through different approaches. For policies trained by value-based or actor-critic RL algorithms, we can directly query the existing value network. Alternatively, new value networks can be fitted on the collected datasets. Specifically, for approximating the Q function of $\pi^\kappa$, we can perform SARSA-style policy evaluation on dataset $D^\kappa$. Although this yields underestimated values due to exploration when collecting $D^\kappa$, it does not incur violation of the safety constraint, because Theorem \ref{thm:SEAS} still holds even when the input Q functions have underestimated values. The safe exploration policies are then distributed and deployed to multiple advertising campaigns, running in parallel for a specific duration (\emph{e.g.} one day) to collect an interaction dataset $D^k$. Subsequently, we employ an offline RL algorithm (\emph{e.g.} IQL) to perform policy training on $D^k$, where the sampling probabilities are computed using Robust Trajectory Weighting. The resulting trained policy $\pi^{k+1}$ becomes the input for the subsequent iteration, and the process continues iteratively.

\section{Details of Offline Experiments} \label{setup}
\textbf{Setup.} For offline experiments, we construct a simulated advertising system mainly based on the simulated system in \cite{mou2022sustainable}. There are 30 advertisers competing for advertising impressions. An episode corresponds to one day in simulation, which is divided into 96 time steps. The number of impressions in each time step is random, and follows a uniform distribution on [50, 300]. The budget of each advertiser follows a uniform distribution on [1500, 3000]. Before an episode starts, the number of impressions in every time step, as well as the value of each impression for each advertiser is initialized. The state of an advertiser is three-dimensional: $[time, budget\_consumed, budget\_left]$. The reward of an advertiser is the total value she wins in all ad auctions during one time step. Given current states and actions of all advertisers, the simulation returns next states and rewards. This is achieved by simulating ad auctions for all impressions in a single time step. For each impression, the system performs pre-ranking and ranking, and decides the winner of the auction. The winner gets the value of the impression, and pays for it according to the auction mechanism. We train a bidding policy for one advertiser while keeping other 29 advertisers' policy fixed. Therefore, the policies of other bidders could be seen as part of the environment which is stationary. The trained auto-bidding policy could serve for bidders with different budgets since the information of total budget is contained in the state representation. The implementation of the pre-ranking and ranking(auction) part follows from that in  \cite{mou2022sustainable}, where more details could be found.

\textbf{Parameter Settings.} Datasets collected in each iteration consists of 100000 transition tuples. Strength of trajectory weighting $\alpha$ is set to 0.1. Safety threshold $\epsilon$ is 0.05. PSN is implemented as factorised Gaussian noise \cite{fortunato2017noisy} with $\sigma$ searched from [0.01,0.03,0.05] and kept fixed during training. ASN is Gaussian noise with $\sigma$ searched from [0.3,0.5,1]. In the IQL algorithm, the expectile parameter is set to 0.6 and $\beta$ is 1.25. Conservative factor is chosen as $\alpha=0.8$ in CQL, and $\alpha=2.5$ in TD3+BC. Implementation of SORL is consistent with the original paper \cite{mou2022sustainable} and code.

\end{document}